\def\eqref#1{equation~\ref{#1}}
\def\1{\bm{1}}
\DeclareMathAlphabet{\mathsfit}{\encodingdefault}{\sfdefault}{m}{sl}
\SetMathAlphabet{\mathsfit}{bold}{\encodingdefault}{\sfdefault}{bx}{n}
\definecolor{darkgray}{rgb}{0.50, 0.50, 0.50}
\definecolor{gray}{rgb}{0.70, 0.70, 0.70}
\definecolor{lightgray}{rgb}{0.92, 0.92, 0.92}
\newcommand\noisyksub{\widetilde{\mathbf{U}}^k}
\newcommand\noisykcixbir{\widetilde{\mathbf{U}}^{k-1}}
\newcommand\noisyinverse{\widetilde{\mathbf{U}}^{k^+}}
\newcommand\projnoisy{\mathcal{P}_{\widetilde{\mathbf{U}}^k}}
\newtheorem{theorem}{Theorem}
\newtheorem{lemma}[theorem]{Lemma}
\title{Adaptive Noisy Matrix Completion}
\author{Ilqar Ramazanli \\
Carnegie Mellon University\\
Pittsburgh, USA \\
\texttt{iramazan@alumni.cmu.edu}  
}
\begin{document}

\maketitle

\begin{abstract}
Low-rank matrix completion has been studied extensively under various type of categories. 
The problem could be categorized as noisy completion or exact completion, also active or passive completion algorithms.
In this paper we focus on adaptive matrix completion with bounded type of noise.
We assume that the matrix $\mathbf{M}$ we target to recover is composed as low-rank matrix with addition of bounded small noise.
The problem has been previously studied by \cite{nina}, in a fixed sampling model.
Here, we study this problem in adaptive setting that, we continuously estimate an upper bound for the angle with the underlying low-rank subspace and noise-added subspace.
Moreover, the method suggested here, could be shown requires much smaller observation than aforementioned method.
\end{abstract}

\section{Introduction}

Since Netflix announced Netflix Prize problem, low rank matrix completion has been center of attention of many researchers.
It has been observed that many datasets in the world have several eigendirections which carries main information about the column space of the underlying matrix.
Hence low-rank estimation is crucial to compress data, or use less data to represent massively sized data.\\[2ex]
One of the earliest pioneering jobs in the field of matrix completion has been done using nuclear-norm minimization.
It has been show in \cite{recht1} that 
\begin{align*}
 &   &\text{minimize}  \hspace{5mm}  &\| X \|_*  & \\
 &   &\text{subject to} \hspace{5mm} &X_{ij} = \mathbf{M}_{ij} \text{ for } (i,j) \in \Omega &
\end{align*}
could solve matrix completion problem when the size $\Omega$ is at least $\Omega(\max{\mu^2_2, \mu_1} r (m+n) \log^2(n))$.
In a follow up work \cite{recht2} have further improved this result together aligned with \cite{gross}.
In passive setting matrix completion has been mainly studied using nuclear norm minimization. 
Moreover, researchers has shown that there is a lower bound to the nuclear norm minimization, that there is infinitely many matrices that satisfies the given condition.
In particular \cite{tao} have shown that if the size of $\Omega$ is smaller than $mr\mu_0 \log{n}$ then there are infinitelty many solution to the nuclear norm minimization problem above.\\[2ex]
It has been shown in many problems, adaptive methods are outperforming passive traditional machine learning methods. 
\cite{ramazanli2022performance} has show the power of adaptivity in distribution regression problem.
\cite{paramythis2003adaptive} has studied adaptive learning for environment learning, and famous adaptive learning algorithm also proposed in \cite{riedmiller1992rprop}
Specifically for matrix completion \cite{akshay1,akshay2,ramazanliadaptive, nina} showed that for adaptivity helps us to reach theoretical bounds.\\[2ex]
Concretely, adaptive sampling helps to optimize existing matrix completion algorithms. 
In particular, \cite{haupt, ramazanli2020adaptive,kzmn} has shown that making sampling decision based on all the available existing information, rather than pre-defined sampling strategy outperforms the other method.
Its mainly consequence of making informative decision with more information always naturally over-performs decisions with less information.
One of the earliest adaptive matrix completion is due to \cite{akshay1} which tells that adaptively sampling $\mathcal{O} (r^{3/2} \log{n})$ entries for each column would be enough to recover the underlying matrix successfully.
Later, in \cite{akshay2} authors showed that this number could be reduced $\mathcal{O} (r \log^2{n})$, which later this complexity has been further optimized by \cite{poczos2020optimal} and also \cite{nina}.
Adaptive matrix completion algorithms themselves classifies to different categories, some algorithms performs in one phase \cite{akshay2, opt2}, some in two phase or more \cite{ilqarsingle}.

Low-rank matrix completion has many applications sucsh as multi-class learning, positioning of learning and gene expression analysis.
Generally speaking in any kind of problem, where computing and extracting each entry of information has high cost, low-rank completion and estimation methods comes to help to optimize the information extraction process much cheaper.

\vspace{3mm}

\section{Preliminaries}

We start this section by introducing basic notations that has been used thorough the paper.
Then, we will introduce basic basic definitions and theoretical facts those are helping us to analyze the algorithm .

In this section we start by providing notations and definitions those are used throughout the paper. 
Then we will provide a single phase \cite{nina} and multi-phase \cite{opt2} matrix recovery algorithms.\\[2ex]
Throughout the paper, we denote by $\mathbf{M}$ the target underlying $m \times n$ sized rank-$r$  matrix that we want to recover.  
$\|x\|_{p}$ denote the $L_{p}$ norm of a vector $x \in \mathbb{R}^n$.
We call $x_i$ the $i$'th coordinate of $x$. 
For any, $\Omega \subset [n]$ let $x_{\Omega}$ denote the induced subvector of $x$ from coordinates $\Omega$.
For any $\mathbf{R}\subset[m]$, $\mathbf{M}_{\mathbf{R}:}$ stands for an $|\mathbf{R}| \times n$ sized submatrix of $\mathbf{M}$ that rows are restricted by $\mathbf{R}$. 
We define $\mathbf{M}_{:\mathbf{C}}$ in a similar way for restriction with respect to columns.
Intuitively, $\mathbf{M}_{\mathbf{R}:\mathbf{C}}$ defined for $|\mathbf{R}|\times |\mathbf{C}|$ sized submatrix of $\mathbf{M}$ with rows restricted to $\mathbf{R}$ and columns restriced to $\mathbf{C}$.
Moreover, for the special case $\mathbf{M}_{i:}$ stands for $i$-th row and $\mathbf{M}_{:j}$ stands for the $j$'th column.
Similarly, $\mathbf{M}_{i:\mathbf{C}}$ will represent the restriction of the row $i$ by $C$ and $\mathbf{M}_{\mathbf{R}:j}$ represents restriction of the column $j$ by $\mathbf{R}$.
$\theta(u,v)$ stands the angle between vectors $u$ and $v$. 
Moreover, $\theta(u,\mathbb{V}) = \mathrm{min}\{\theta(u,v) | v\in \mathbb{V}\}$ and 
$\theta(\mathbb{U},\mathbb{V}) = \mathrm{max}\{\theta(u,\mathbb{V}) | u\in \mathbb{U}\}$ for subspaces $\mathbb{U}$ and $\mathbb{V}$. The projection operator to subspace $\mathbb{U}$ will be represented by $\mathcal{P}_{\mathbb{U}}$.\\[2ex]

Coherence has a very important role in all passive and adaptive matrix completion algorithms. 
It has been used to give both lower bounds for passive matrix completion algorithms and also upper bounds.
As it has been mentioned in \cite{tao} that coherence parameter of a column space / matrix is defined as

\textit{Definition:}
Coherence parameter of a matrix $\mathbf{M}$ with column space $\mathbb{U}$ is defined as following where $\mathcal{P}_{\mathbb{U}}$ represents the orthogonal projection operator onto the subspace $\mathbb{U}$. 
\begin{align*}
\mu(\mathbb{U}) = \frac{n}{r} \underset{1 \leq j \leq n}{\max} || \mathcal{P}_{\mathbb{U}} e_j ||^2,
\end{align*}

One can observe that if $e_j \in \mathbb{U} $ for some $j\in [n]$, then the coherence will attain its maximum value: $\mu(\mathbb{U}) = \frac{n}{r}$. \\

Next important observation here is given that $\mathbf{U^k}=\{u_1, u_2, \ldots, u_k\}$ and $\noisyksub=\{\widetilde{u}_1, \widetilde{u}_2, \ldots, \widetilde{u}_k\}$
To estimate the upper bound $\tilde\theta ({\mathbf{U}}^k, \noisyksub)$ for $\theta ({\mathbf{U}}^k, \noisyksub)$ we use the idea due to  \cite{blum}:
\begin{align*}
    \tilde\theta(\noisyksub,\mathbf{U}^k)  = \frac{\pi}{2} \frac{ \theta(u_k, \widetilde{u}_k) }{\theta(\widetilde{u}_k,\noisykcixbir) -\tilde\theta(\noisykcixbir, {\mathbf{U}}^{k-1} ) } + \tilde\theta(\noisykcixbir, {\mathbf{U}}^{k-1} )  
\end{align*}

\section{Main Results}

Starting here, we analyze the completion problem with the condition that entries of the underlying matrix can be noisy.
Similar to \cite{nina}, we focus on two types of noise model: sparse random noise and bounded noise. 
First, we assume that several columns of the matrix are completely noisy, and we target to recover clean entries using as little as possible observations.
We show how to extend exact completion algorithms proposed here to handle this type of noise.
Second, we assume that each entry of the underlying matrix can contain some small noise.

In this section, we propose an algorithm that gives a low-rank estimation to a matrix with small noise additional to a low-rank structure.
Specifically, we assume that the observed matrix $\mathbf{M}$ is created by adding small noise to the underlying low-rank matrix $\mathbf{L}$.
\begin{align*}
     \mathbf{M} = \mathbf{L} + \zeta \hspace{5mm} \text{such that} \hspace{5mm} \|\mathbf{L}_{:i} \|_2= 1 \hspace{3mm}  and \hspace{3mm}
     \| \zeta_{:i}\|_2 \leq \epsilon  \hspace{3mm} \forall i\in[n]
\end{align*}
The main novelty of the algorithm provided here is to decide the number of entries to be observed adaptively depending on the angle between estimated column space and actual column space.
This approach to observation complexity opens further space for future improvements.
In lemma 25 we show that the angle between estimated space and actual space cannot be too much different using similar argument to \cite{nina}, and the angle between them is upper bounded by $3\frac{\pi}{2}\sqrt{k\epsilon}$, which gives the worst observation complexity for $\mathbf{LREBN}$ with $d = \mathcal{O}(\mu(\mathbf{U})r\log^2{\frac{1}{\delta}} + mk\epsilon \log \frac{1}{\delta})$ which improves the previous rate $\mathcal{O}(\mu(\mathbf{U})r\log^2{\frac{1}{\delta}} + mk\epsilon \log^2 \frac{1}{\delta})$, especially when $\epsilon$ is relatively big that the term $mk\epsilon$ is dominating over $\mu(\mathbf{U})r\log^2{\frac{1}{\epsilon}}$.\medskip\\
Moreover, there are many cases that estimated angle is much smaller than $\sqrt{k\epsilon}$.
especially, when the basis vectors of the matrix $\mathbf{L}$ are far enough by each other (the angles between them is big enough) this quantity can be as small as $k\epsilon$, which in this case observation complexity for a given column would be $d=72 \mu(\mathbf{U}) r\log^2{\frac{1}{\delta}} + 8 m k^2\epsilon^2 \log{\frac{r}{\delta}} $ which is further smaller.
\begin{algorithm}
\caption*{  \hypertarget{lrebn}{\textbf{LREBN:}} Low-rank estimation for bounded noise. }
\textbf{Input:}   $d=72 \mu(\mathbf{U}) r\log^2{\frac{1}{\delta}} $\\
 \textbf{Initialize:}  $k=0 , \widetilde{\mathbf{U}}^0 = \emptyset, \tilde\theta(\widetilde{\mathbf{U}}^0,\mathbf{U}^0)=0$ 
\begin{algorithmic}[1]
    \STATE Draw uniformly random entries $\Omega \subset [m]$ of size $d$   
    \FOR{$i$ from $1$ to $n$}        
    \STATE  \hspace{0.2in} \textbf{if} $\| \mathbf{M}_{\Omega:i}-{\mathcal{P}_\mathbf{\widehat{U}_{\Omega}^k}} \mathbf{M}_{\Omega:i}\| >   (1+\epsilon)\Big( \sqrt{\frac{3d}{2m}} \tilde\theta ({\mathbf{U}}^k, \noisyksub) +  \sqrt{\frac{3d k \epsilon}{2m}} \Big)$
    \STATE \hspace{0.2in}  \hspace{0.2in} Fully observe $\mathbf{M}_{:i}$ 
    \STATE  \hspace{0.2in}  \hspace{0.2in}  $\widetilde{\mathbf{U}}^{k+1} \leftarrow \widetilde{\mathbf{U}}^{k} \cup \mathbf{M_{:i}} $, Orthogonalize $\widehat{\mathbf{U}}^{k+1}$
        \STATE  \hspace{0.2in}  \hspace{0.2in}  Estimate $\tilde\theta ({\mathbf{U}}^k, \noisyksub)$ the upper bound for  $\theta ({\mathbf{U}}^k, \noisyksub)$ 
        \STATE  \hspace{0.2in}  \hspace{0.2in}  $d=72 \mu(\mathbf{U}) r\log^2{\frac{1}{\delta}} + 8 m \tilde\theta(\noisyksub,\mathbf{U}^k)^2 \log{\frac{r}{\delta}} $ and set  $k=k+1$  

    \STATE \hspace{0.2in}  Draw uniformly random entries $\Omega \subset [m]$ of size $d$   

   \STATE   \hspace{0.2in}  \textbf{otherwise:} $\widetilde{\mathbf{M}}_{:i} = \widehat{\mathbf{U}}^k {\widehat{\mathbf{U}}^{k^+}_{\Omega}}
 \widetilde{\mathbf{M}}_{\Omega :i}$
\ENDFOR
\end{algorithmic}
\textbf{Output:}  $\widetilde{\mathbf{M}}$
\end{algorithm}\\


\begin{theorem}
Given the $\mathbf{L}$ be an $m\times n$ sized underlying rank-$r$ matrix where each column has $\ell_2$ norm of 1. 
Moreover, $\mathbf{M}$ is a full rank matrix where each column is created by adding at most $\ell_2$ norm$-\epsilon$ noise to the corresponding column of $\mathbf{L}$.
Then the algorithm $\mathbf{LREBN}$ estimates underlying matrix with $\ell_2$ norm of error is $\Theta(\frac{m}{d} \sqrt{k \epsilon})$ by sampling $d=72 \mu(\mathbf{U}) r\log^2{\frac{1}{\delta}} + 8 m \tilde\theta(\noisyksub,\mathbf{U}^k)^2 \log{\frac{r}{\delta}} $ entries in each column
\end{theorem}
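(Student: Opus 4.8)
The plan is to follow the active-sampling matrix-completion template of \cite{akshay1,akshay2} and its noisy extension \cite{nina}, processing the columns of $\mathbf{M}$ left to right while maintaining the running noisy basis $\noisyksub$. The argument splits into three pieces: a concentration guarantee for the random row sample $\Omega$, a two-sided correctness proof for the residual test on line~3, and a per-column recovery bound that I then aggregate. First I would invoke the standard incoherence-sampling lemma of \cite{akshay1,akshay2}: when $\Omega$ consists of $d=\Theta(\mu(\mathbf{U})r\log^2\tfrac1\delta)$ uniformly chosen rows, with probability $1-\delta$ every candidate column $v$ satisfies $\tfrac{d}{m}(1-\alpha)\,\|v-\mathcal{P}_{\widehat{\mathbf{U}}^k}v\|^2 \le \|v_\Omega-\mathcal{P}_{\widehat{\mathbf{U}}^k_\Omega}v_\Omega\|^2 \le \tfrac{d}{m}(1+\alpha)\,\|v-\mathcal{P}_{\widehat{\mathbf{U}}^k}v\|^2$, and simultaneously $\sigma_{\min}(\widehat{\mathbf{U}}^k_\Omega)\gtrsim\sqrt{d/m}$, so that $\|(\widehat{\mathbf{U}}^k_\Omega)^+\|\lesssim\sqrt{m/d}$. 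This single lemma converts every statement about the $d$ observed coordinates into one about the full column and back.

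Next I would establish that the test on line~3 is correct in both directions. Its threshold $(1+\epsilon)\big(\sqrt{3d/2m}\,\tilde\theta(\mathbf{U}^k,\noisyksub)+\sqrt{3dk\epsilon/2m}\big)$ is exactly the sampled-coordinate image, through the $\sqrt{d/m}$ factor above, of the largest residual an in-subspace column can produce: such a column departs from $\mathcal{P}_{\widehat{\mathbf{U}}^k}\mathbf{M}_{:i}$ only through its $\ell_2$-norm-$\epsilon$ noise and through the angle $\tilde\theta(\mathbf{U}^k,\noisyksub)$ between the noisy estimate and the true column space. Conversely, a column carrying a genuinely new direction has full residual $\Theta(1)$, which even after the $\sqrt{d/m}$ contraction exceeds the threshold. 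Hence with high probability the test fires precisely on the columns that enlarge the column space; since $\mathbf{L}$ has rank $r$, at most $r$ columns are ever fully observed, so $k\le r$ and the $\log\frac{r}{\delta}$ union bound in the sample complexity is justified.

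For the recovery on line~9 I would write $\widetilde{\mathbf{M}}_{:i}-\mathbf{M}_{:i}=\widehat{\mathbf{U}}^k(\widehat{\mathbf{U}}^k_\Omega)^+(\mathbf{M}_{:i})_\Omega-\mathbf{M}_{:i}$ and split it into the part orthogonal to $\widehat{\mathbf{U}}^k$, whose norm is the full residual $\|\mathbf{M}_{:i}-\mathcal{P}_{\widehat{\mathbf{U}}^k}\mathbf{M}_{:i}\|$, and the in-subspace estimation term $\widehat{\mathbf{U}}^k(\widehat{\mathbf{U}}^k_\Omega)^+(\text{residual})_\Omega$, whose norm is at most $\|(\widehat{\mathbf{U}}^k_\Omega)^+\|$ times the sampled residual. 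Both are controlled by the threshold together with the concentration lemma; feeding in the bound $\tilde\theta(\noisyksub,\mathbf{U}^k)\le\frac{3\pi}{2}\sqrt{k\epsilon}$ from Lemma~25 collapses the $\tilde\theta+\sqrt{k\epsilon}$ combination into $\Theta(\sqrt{k\epsilon})$, and tracking the sampling-to-full-column interpolation factor then produces the stated $\Theta(\frac{m}{d}\sqrt{k\epsilon})$ error. Summing the full-observation cost of the $\le r$ detected columns with the $d=72\mu(\mathbf{U})r\log^2\frac1\delta+8m\,\tilde\theta(\noisyksub,\mathbf{U}^k)^2\log\frac{r}{\delta}$ entries spent on each recovered column yields the claimed per-column budget.

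The main obstacle I anticipate is the coupling introduced by building $\noisyksub$ from the noisy columns $\mathbf{M}_{:i}$ rather than the clean $\mathbf{L}_{:i}$: the subspace against which every later residual is measured is itself perturbed, and that perturbation compounds across phases through the recursive angle estimate of Section~2. Keeping the pseudoinverse conditioning $\|(\widehat{\mathbf{U}}^k_\Omega)^+\|\lesssim\sqrt{m/d}$ valid in spite of this drift—i.e.\ ensuring $\widehat{\mathbf{U}}^k_\Omega$ stays well conditioned even though $\widehat{\mathbf{U}}^k$ is a noisy basis—is the delicate step, and it is precisely where the Lemma~25 bound $\tilde\theta\le\frac{3\pi}{2}\sqrt{k\epsilon}$ must be fed back in to close the loop without the constants degrading.
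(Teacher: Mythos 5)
Your proposal follows essentially the same route as the paper's proof: the same two-sided sampling-concentration lemma from \cite{akshay2} relating sampled and full residuals, the same soundness argument for the line-3 test to cap the estimated dimension at $r$, the same pseudoinverse decomposition of the per-column reconstruction error with matrix-Chernoff conditioning of $\widehat{\mathbf{U}}^k_{\Omega}$ via the coherence lemmas, and the same recursive angle bound $\tilde\theta(\noisyksub,\mathbf{U}^k)\leq\frac{3\pi}{2}\sqrt{k\epsilon}$ fed back in to yield $\Theta(\frac{m}{d}\sqrt{k\epsilon})$. The only deviations are cosmetic and harmless: you state the sharper pseudoinverse bound $\sqrt{m/d}$ where the paper works with $\Theta(m/d)$, and you additionally claim completeness of the detection test (that every genuinely new direction fires it), which the paper neither proves nor needs since the error bound for undetected columns follows directly from the threshold.
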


Proofs here are inspired by the work of \cite{nina}, with the given difference that here we use different$-$ smaller observation complexity.
For sake of completeness of the proof, we prove all details here as well.\medskip\\
We first show that, estimated subspace by algorithm does not have higher dimension than $r$. 
Then we provide upper bound to the error of recovered matrix.

\begin{lemma}
Let assume that $\mathbf{M}$ is can be decomposed as rank $r$ matrix $\mathbf{L}$ with additional small noise in each column that, its $\ell_2$ norm is bounded by $\epsilon$.
Then, at the end of the termination of the algorithm $\hyperlink{lrebn}{\mathbf{LREBN}}$, estimated subspace $\noisyksub$ has dimension at most $r$.
\end{lemma}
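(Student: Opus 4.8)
The plan is to reduce the statement to a counting argument on the number of columns that trigger the full-observation branch of $\mathbf{LREBN}$. Each time the \textbf{if} test succeeds, one column $\mathbf{M}_{:i}$ is fully observed and appended to the running basis, after which $\widehat{\mathbf{U}}^{k+1}$ is re-orthogonalized; this raises the dimension of $\noisyksub$ by at most one, while the \textbf{otherwise} branch never alters the subspace. Hence $\dim \noisyksub$ equals the number of append events, and it suffices to show this number never exceeds $r$. First I would record the invariant that, writing $\mathbf{L}_{:i_1},\dots,\mathbf{L}_{:i_k}$ for the clean parts of the appended columns, the subspace $\mathbf{U}^k$ they span satisfies $\theta(\mathbf{U}^k,\noisyksub)\le\tilde\theta(\noisyksub,\mathbf{U}^k)$, the upper bound maintained by the angle recursion of \cite{blum} quoted in the preliminaries.

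The heart of the argument is a subsampling concentration bound: for the budget $d=72\mu(\mathbf{U})r\log^2\frac{1}{\delta}+8m\tilde\theta(\noisyksub,\mathbf{U}^k)^2\log\frac{r}{\delta}$, with probability at least $1-\delta$ after a union bound over all $n$ columns (which is where the $\log\frac{r}{\delta}$ factor and the failure probability enter), the observed residual $\|\mathbf{M}_{\Omega:i}-\mathcal{P}_{\widehat{\mathbf{U}}^k_\Omega}\mathbf{M}_{\Omega:i}\|$ stays within a constant factor of $\sqrt{d/m}\,\|\mathbf{M}_{:i}-\projnoisy\mathbf{M}_{:i}\|$, uniformly in $i$. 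I would then bound the true residual of a column whose clean part already lies in $\mathbf{U}^k$. For such a column $\mathbf{M}_{:i}=\mathbf{L}_{:i}+\zeta_{:i}$ with $\mathbf{L}_{:i}\in\mathbf{U}^k$ and $\|\zeta_{:i}\|\le\epsilon$, so $\|\mathbf{M}_{:i}-\projnoisy\mathbf{M}_{:i}\|$ is controlled by the angle term $\sin\theta(\mathbf{U}^k,\noisyksub)\le\tilde\theta(\noisyksub,\mathbf{U}^k)$ together with a propagated noise contribution of order $\sqrt{k}\,\epsilon$ arising from projecting onto a $k$-dimensional basis each of whose vectors carries noise $\epsilon$. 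Scaling by $\sqrt{d/m}$ and inserting the $(1+\epsilon)$ slack reproduces exactly the threshold $(1+\epsilon)\big(\sqrt{\tfrac{3d}{2m}}\,\tilde\theta(\mathbf{U}^k,\noisyksub)+\sqrt{\tfrac{3dk\epsilon}{2m}}\big)$, so the test fails and no append occurs.

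Contrapositively, every append is triggered by a column whose clean part $\mathbf{L}_{:i_{k+1}}$ is not contained in $\mathbf{U}^k=\mathrm{span}(\mathbf{L}_{:i_1},\dots,\mathbf{L}_{:i_k})$, hence is linearly independent of the previously captured clean directions. Since all clean parts lie in the column space of $\mathbf{L}$, which has dimension $r$, at most $r$ linearly independent clean directions can ever be appended; therefore at most $r$ append events occur and $\dim\noisyksub\le r$. The main obstacle I anticipate is the uniform subsampling bound coupled with the accounting of the noise term: one must guarantee that noise alone, acting on a column already in the span, can never push the observed residual past the threshold, since a single spurious append of a noise direction would break the dimension count. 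Making the $\sqrt{k\epsilon}$ propagation precise, by tracking how the $\epsilon$-noise on each of the $k$ basis vectors accumulates through the projection $\projnoisy$ and interacts with the angle estimate $\tilde\theta(\noisyksub,\mathbf{U}^k)$, is the delicate quantitative step of the proof.
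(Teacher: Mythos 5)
Your proposal is correct and follows essentially the same route as the paper: you argue the contrapositive (a column whose clean part already lies in $\mathbf{U}^k$ cannot pass the threshold test) of the paper's direct claim (a column passing the test has $\mathbf{L}_{:t}\notin\mathbf{U}^k$), using the same subsampling concentration bound (Lemma \ref{ks14}), the same triangle-inequality angle accounting against the maintained estimate $\tilde\theta(\noisyksub,\mathbf{U}^k)$, and the same closing count of at most $r$ linearly independent clean directions inside the column space of $\mathbf{L}$. The only cosmetic slip is your attribution of the $\sqrt{k\epsilon}$ threshold term to noise ``propagated through the $k$-dimensional basis'' as $\sqrt{k}\,\epsilon$: the residual of an in-span column picks up only its own noise $\epsilon\le\sqrt{k\epsilon}$, while the $\sqrt{k\epsilon}$ in the threshold is there to absorb the worst-case drift $\theta(\noisyksub,\mathbf{U}^k)=\mathcal{O}(\sqrt{k\epsilon})$ of the estimated subspace (cf.\ Lemma \ref{ind}), together with the paper's standing assumption $\epsilon<\tfrac{1}{4}$.
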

\begin{proof}
We prove that in the execution of the algorithm, we show if a column $\mathbf{M}_{:t}$ has been detected as new column that cannot be contained in pre-selected $\noisyksub$, then $\mathbf{L}_{:t}$ is indeed cannot be contained in the $\mathbf{U}^k$.
To use triangle inequality, we notice
\begin{align*}
\theta (\mathbf{L}_{:t},\mathbf{U}^k) \geq \theta (\mathbf{L}_{:t},\widetilde{\mathbf{U}}^k) - \theta (\widetilde{\mathbf{U}}^k, {\mathbf{U}}^k )    
\end{align*}
Using the lemma \ref{ks14} we can notice that following inequalities are get satisfied:
\begin{align*}
  \| \mathbf{M}_{\Omega t} - \mathcal{P}_{\noisyksub_{\Omega}} \mathbf{M}_{\Omega:t} \| &\leq 
\sqrt{\frac{3d}{2m}} \Big( \| \mathbf{M}_{:t} - \projnoisy \mathbf{M}_{:t} \| \Big) \\[1.3ex]
&\leq \sqrt{\frac{3d}{2m}} \Big( \| \mathbf{M}_{:t}-\mathbf{L}_{:t}\| +
 \|  \mathbf{L}_{:t} - \projnoisy  \mathbf{L}_{:t} \|  \Big) +
 \| \projnoisy ( \mathbf{L}_{:t}-\mathbf{M}_{:t} )\|   \\[1.3ex]
 &\leq \sqrt{\frac{3d}{2m}} \Big( \epsilon + \theta (\mathbf{L}_{:t},\widetilde{\mathbf{U}}^k)+\epsilon \Big)
\end{align*}
From the design of the algorithm  $ \| \mathbf{M}_{\Omega t} - \mathcal{P}_{\noisyksub_{\Omega}} \mathbf{M}_{\Omega:t} \|  > (1+\epsilon)\Big( \sqrt{\frac{3d}{2m}} \theta ({\mathbf{U}}^k, \noisyksub) +  \sqrt{\frac{3d k \epsilon}{2m}} \Big)$
and using this inequality above, we conclude that
\begin{align*}
\sqrt{\frac{3d}{2m}} \theta ({\mathbf{U}}^k, \noisyksub) +  \sqrt{\frac{3d k \epsilon}{2m}}    <   \sqrt{\frac{3d}{2m}} \Big( \epsilon + \theta (\mathbf{L}_{:t},\widetilde{\mathbf{U}}^k)+\epsilon \Big)
\end{align*}
which follows that
\begin{align*}
 \theta ({\mathbf{U}}^k, \noisyksub) +  \sqrt{ k \epsilon }    <   \Big( \epsilon + \theta (\mathbf{L}_{:t},\widetilde{\mathbf{U}}^k)+\epsilon \Big)
\end{align*}
considering the fact that $\epsilon < \frac{1}{4 }$ we conclude that 
\begin{align*}
\theta (\mathbf{L}_{:t},\widetilde{\mathbf{U}}^k) \geq \theta (\mathbf{L}_{:t},\widetilde{\mathbf{U}}^k) + 2\epsilon - \sqrt{k \epsilon} > \theta ({\mathbf{U}}^k, \noisyksub) 
\end{align*}
therefore we conclude that  $\theta ({\mathbf{U}}^k, \noisyksub) < \theta (\mathbf{L}_{:t},\widetilde{\mathbf{U}}^k) $  and it follows that $\theta (\mathbf{L}_{:t},\mathbf{U}^k) > 0$. 
Moreover, one can see that after every time this inequality get satisfied, dimension of $\mathbf{U}^k$ increases by one, and considering the fact that $\mathbf{U}^k$'s are subspace of column space of $\mathbf{L}$, its dimension cannot increase more than $r$ times.
\end{proof}
Then only remaining step to provide an upper bound to recovery error.
Note that, if the algorithm decides completely observe the column, then $\ell_2$ norm of the error is upper bounded by $\epsilon$.
Then, all we need to do is to give upper bound to columns those recovered by estimated subspace.
\begin{align*}
\| \widetilde{\mathbf{M}}_{:t} -\mathbf{L}_{:t}  \| 
&= \| \noisyksub \noisyinverse_{\Omega:} \mathbf{M}_{\Omega t}- \mathbf{L}_{:t}\|  \\[1.3ex]
&\leq  \| \noisyksub \noisyinverse_{\Omega:} \mathbf{M}_{\Omega t}- \noisyksub \noisyinverse_{\Omega:} \mathbf{L}_{\Omega:t}  \| +
     \| \noisyksub \noisyinverse_{\Omega} \mathbf{L}_{\Omega:t} - \noisyksub \noisyinverse \mathbf{L}_{:t}\| +
\| \noisyksub \noisyinverse \mathbf{L}_{:t} -\mathbf{L}_{:t}\| \\[1.3ex]
&\leq  \| \noisyksub \noisyinverse_{\Omega:} ( \mathbf{M}_{\Omega t}- \mathbf{L}_{\Omega:t}) \| +
     \| \noisyksub \noisyinverse_{\Omega} \mathbf{L}_{\Omega:t} - \noisyksub \noisyinverse \mathbf{L}_{:t}\| + \sin{\theta(\mathbf{L}_{:t}, \widetilde{\mathbf{U}}^k)}  \\[1.3ex]
&\leq \| \noisyksub \noisyinverse_{\Omega:}\| \|(\mathbf{M}_{\Omega t}-\mathbf{L}_{\Omega:t}) \| +
\| \noisyksub \noisyinverse_{\Omega} \mathbf{L}_{\Omega:t} - \noisyksub \noisyinverse \mathbf{L}_{:t}\| +
\theta(\mathbf{L}_{:t}, \widetilde{\mathbf{U}}^k)
\end{align*}
Then all we need to do is to give an upper bound to the final term.
Lets start with the second term here: $\mathbf{L}_{:t} = \noisyksub v + e$ where $\noisyksub v = \noisyksub \noisyinverse \mathbf{L}_{:t}$ and note $\| e \| = \sin{\theta(\mathbf{L}_{:t}, \noisyksub)} \leq \theta(\mathbf{L}_{:t}, \noisyksub) $. Therefore:
\begin{align*}
\noisyksub \noisyinverse_{\Omega} \mathbf{L}_{\Omega:t} - \noisyksub \noisyinverse \mathbf{L}_{:t} = \noisyksub \noisyinverse_{\Omega} (\noisyksub v + e) - \noisyksub v = \noisyksub \noisyinverse_{\Omega} e
\end{align*}
Hence we conclude that:
\begin{align*}
\| \widetilde{\mathbf{M}}_{:t} -\mathbf{L}_{:t}  \| 
&\leq \| \noisyksub \noisyinverse_{\Omega:}\| \|(\mathbf{M}_{\Omega t}-\mathbf{L}_{\Omega:t}) \| +
\| \noisyksub \noisyinverse_{\Omega} e_{\Omega}\| +
\theta(\mathbf{L}_{:t}, \widetilde{\mathbf{U}}^k) \\[1.3ex]
&\leq \| \noisyksub \noisyinverse_{\Omega:}\| \|(\mathbf{M}_{\Omega t}-\mathbf{L}_{\Omega:t}) \| +
\| \noisyksub \noisyinverse_{\Omega}\|  \theta(\mathbf{L}_{:t}, \noisyksub) +
\theta(\mathbf{L}_{:t}, \widetilde{\mathbf{U}}^k). 
\end{align*}
To give upper bound to this expression, we notice  $ \| \noisyksub \noisyinverse_{\Omega:t} \| \leq \frac{ \sigma_1 (\widetilde{\mathbf{U}}^k)}{ \sigma_k(\widetilde{\mathbf{U}}^k_{\Omega:}) }\leq \Theta(\frac{m}{d})$ given the condition that $d\geq 4 \mu(\noisyksub) k \log{\frac{k}{\delta}} $ from the lemma \ref{matcher}.
From lemma \ref{noisycoh} we know that
$\mu(\noisyksub) \leq  2 \mu(\mathbf{U}^k) + 2 \frac{m}{k}\theta(\noisyksub,\mathbf{U}^k)^2$
and from lemma \ref{kcoh} we notice that $k \mu(\noisyksub) \leq r \mu(\mathbf{U})$. Then all together these facts concludes  the selected\\
$$d = 72 \mu(\mathbf{U}) r\log^2{\frac{1}{\delta}} + 8 m \theta(\noisyksub,\mathbf{U}^k)^2 \log{\frac{r}{\delta}} \geq 
8 \mu(\mathbf{U}) r\log{\frac{r}{\delta}} + 8 m \theta(\noisyksub,\mathbf{U}^k)^2 \log{\frac{r}{\delta}}$$ satisfies  $d\geq 4 \mu(\noisyksub) k \log{\frac{k}{\delta}} $ (it is assumed that $\delta \leq \frac{1}{r^{1/8}}$ ).
Therefore, we can bound $ \| \noisyksub \noisyinverse_{\Omega:t} \| $ above by 
$ \Theta(\frac{m}{d})$.\medskip\\
Now, only remaining term in the error bound above is $\theta(\mathbf{L}_{:t}, \noisyksub)$, and we use the following inequality to compare it with quantities provided as input:
\begin{align*}
    \| \projnoisy  \mathbf{M}_{:t} - \mathbf{L}_{:t}  \| \geq 
    \sin \theta( \projnoisy  \mathbf{M}_{:t} , \mathbf{L}_{:t}) \geq 
    \frac{\theta( \projnoisy  \mathbf{M}_{:t} , \mathbf{L}_{:t}) }{2} \geq
    \frac{ \theta( \noisyksub , \mathbf{L}_{:t}) }{2}
\end{align*}
and to relate the term  $ \| \projnoisy  \mathbf{M}_{:t} - \mathbf{L}_{:t}  \|$ with observed entries we again use the inequality \ref{ks14} and the fact that $\big(1+2\log{\frac{1}{\delta}} \big)^2 \leq 6 \log^2{\frac{1}{\delta}}$ once $\delta<0.1$, lemma \ref{kcoh} and lemma \ref{ededler}

\begin{align*}
    \|  \mathbf{M}_{\Omega:t}  -\projnoisy  \mathbf{M}_{\Omega:t}  \| &\geq
    \sqrt{ \frac{1}{m} \Big(\frac{d}{2} -\frac{3k \mu(\noisyksub)\beta}{2}  \Big) }  
     \|  \mathbf{M}_{:t}  -\projnoisy  \mathbf{M}_{:t}  \|    \\[1.3ex]
     &\geq \sqrt{ \frac{1}{m}\Big(\frac{d}{2} -\frac{3k \mu(\noisyksub)\beta}{2}  \Big) }  
\Big( \|\mathbf{L}_{:t}  -\projnoisy  \mathbf{M}_{\Omega:t}  \|-\| \mathbf{L}_{:t} - \mathbf{M}_{:t}\|  \Big) \\[1.3ex]
&\geq    \sqrt{ \frac{1}{m}\Big(\frac{d}{2} -\frac{3k \mu(\noisyksub)\beta}{2}  \Big) }  
\Big( \| \projnoisy  \mathbf{M}_{:t} - \mathbf{L}_{:t}  \| -\epsilon \Big) \\[1.3ex]
&\geq    \sqrt{ \frac{1}{m} \Big(\frac{d}{2} -\frac{3k \mu(\noisyksub)\beta}{2}  \Big) }  
\Big( \frac{\theta(\noisyksub, \mathbf{L}_{:t})}{2} -\epsilon \Big) \\[1.3ex]
&\geq   \sqrt{ \frac{1}{m} \Big(\frac{d}{2} -9k \mu(\noisyksub) \log^2{\frac{1}{\delta}}  \Big) }  
\Big( \frac{\theta(\noisyksub, \mathbf{L}_{:t})}{2} -\epsilon \Big) \\[1.3ex]
&\geq   \sqrt{ \frac{1}{m} \Big(\frac{d}{2} - 18 k\mu(\mathbf{U}^k) \log^2{\frac{1}{\delta}} -18m\theta(\noisyksub,\mathbf{U}^k)^2 \log^2{\frac{1}{\delta}} \Big) }
\Big( \frac{\theta(\noisyksub, \mathbf{L}_{:t})}{2} -\epsilon \Big) \\[1.3ex]
&\geq   \sqrt{ \frac{d}{2m} - \frac{18k}{m}\mu(\mathbf{U}^k) \log^2{\frac{1}{\delta}} -18\theta(\noisyksub,\mathbf{U}^k)^2 \log^2{\frac{1}{\delta}} } \Big( \frac{\theta(\noisyksub, \mathbf{L}_{:t})}{2} -\epsilon \Big) \\[1.3ex]
&\geq   \sqrt{ \frac{d}{2m} - \frac{18r}{m}\mu(\mathbf{U}) \log^2{\frac{1}{\delta}} -18\theta(\noisyksub,\mathbf{U}^k)^2 \log^2{\frac{1}{\delta}} } \Big( \frac{\theta(\noisyksub, \mathbf{L}_{:t})}{2} -\epsilon \Big) \\[1.3ex]
&\geq   \sqrt{\frac{d}{4m}} \Big( \frac{\theta(\noisyksub, \mathbf{L}_{:t})}{2} -\epsilon \Big)
\end{align*}
Hence
\begin{align*}
    \theta({\noisyksub, \mathbf{L}_{:t}})  \leq 4 \sqrt{\frac{m}{d}}   \|  \mathbf{M}_{\Omega:t}  -\projnoisy  \mathbf{M}_{\Omega:t}  \|  + 2 \epsilon  
    & \leq 4 \sqrt{\frac{m}{d}} (1+\epsilon)  \Big( \sqrt{\frac{3d}{2m}} \theta ({\mathbf{U}}^k, \noisyksub)  + \sqrt{\frac{3d k \epsilon}{2m}} + 2 \epsilon \Big)\\[1.ex]
    &\leq (1+\epsilon) \Big(\sqrt{24} \theta ({\mathbf{U}}^k, \noisyksub) +   \sqrt{8k \epsilon}  \Big)
\end{align*}
\vspace{-1mm}
also: 
\begin{align*}
\| \widetilde{\mathbf{M}}_{:t} -\mathbf{L}_{:t}  \| 
&\leq \| \noisyksub \noisyinverse_{\Omega:}\| \|(\mathbf{M}_{\Omega t}-\mathbf{L}_{\Omega:t}) \| +
\| \noisyksub \noisyinverse_{\Omega}\|  \theta(\mathbf{L}_{:t}, \noisyksub) +
\theta(\mathbf{L}_{:t}, \widetilde{\mathbf{U}}^k)  \\[1.3ex]
&\leq \frac{m}{d} \epsilon +\Big( \frac{m}{d}+1\Big) \Big(\sqrt{24} \theta ({\mathbf{U}}^k, \noisyksub) +    \sqrt{8k \epsilon}   \Big)(1+\epsilon) 
\end{align*}
Then all we need to do is to give upper bound to $\theta(\noisyksub,\mathbf{U}^k)$. 
In the proof below, we use similar argument to \cite{blum}. 
Lets assume $\mathbf{U}^k = \{u_1,u_2,\ldots, u_k\}$ and 
$\noisyksub = \{\widetilde{u}_1,\widetilde{u}_2,\ldots, \widetilde{u}_k \}$ where each of $ \| u_i-\widetilde{u}_i \| \leq \epsilon$ satisfied. 
Then using triangle inequality, lemma \ref{blum} and lemma \ref{conc}
\begin{align*}
    \theta(\noisyksub,\mathbf{U}^k) &\leq \theta(\noisyksub, \widehat{\mathbf{U}}) + \theta(\widehat{\mathbf{U}}, \mathbf{U}^k) \\[1.5ex]
    &\leq \frac{\pi}{2} \frac{ \theta(u_k, \widetilde{u}_k) }{\theta(\widetilde{u}_k,\mathbf{U}^{k-1})} + \theta(\noisykcixbir, {\mathbf{U}}^{k-1} )\\[1.5ex]
    &\leq \frac{\pi}{2} \frac{ \theta(u_k, \widetilde{u}_k) }{\theta(\widetilde{u}_k,\noisykcixbir) -\theta(\noisykcixbir, {\mathbf{U}}^{k-1} ) } + \theta(\noisykcixbir, {\mathbf{U}}^{k-1} )  \\[1.5ex]
    &\leq \frac{\pi}{2} \frac{ \theta(u_k, \widetilde{u}_k) }{\sqrt{k \epsilon} + \theta(\noisykcixbir, {\mathbf{U}}^{k-1} ) -\theta(\noisykcixbir, {\mathbf{U}}^{k-1} ) } + \theta(\noisykcixbir, {\mathbf{U}}^{k-1} )   \\[1.5ex]
&\leq    \frac{\pi}{2} \frac{\epsilon}{\sqrt{k \epsilon}} + \theta(\noisykcixbir, {\mathbf{U}}^{k-1} ) 
\end{align*}
and using lemma \ref{ind}, we can conclude that $\theta(\noisyksub, {\mathbf{U}}^{k} ) \leq \frac{3\pi}{2} \sqrt{k \epsilon}$, which gives the final bound to 
$\| \widetilde{\mathbf{M}}_{:t} -\mathbf{L}_{:t}  \|$  to be $\Theta(\frac{m}{d}\sqrt{k\epsilon})$.

\begin{lemma} \label{conc}
Given that $\| \mathbf{M}_{\Omega t} - \mathcal{P}_{\noisykcixbir_{\Omega}}   \mathbf{M}_{\Omega:t} \| \geq (1+\epsilon)\Big( \sqrt{\frac{3d}{2m}} \theta(\noisykcixbir, \mathbf{U}^{k-1}) + \sqrt{\frac{3d k \epsilon}{2m}} \Big)$ satisfies. 
Then following also satisfies: 
\begin{align*}
    \theta(\widetilde{u}_k, \noisykcixbir) \geq  \theta(\noisykcixbir, \mathbf{U}^{k-1}) + \sqrt{k \epsilon}
\end{align*}
\end{lemma}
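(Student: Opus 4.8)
The plan is to read the hypothesis as the algorithm's triggering condition on column $t$ (so that the newly promoted basis vector is $\widetilde u_k=\mathbf M_{:t}$) and to convert the large \emph{restricted} residual on the sampled rows $\Omega$ into a large \emph{full} residual, which in turn forces the angle $\theta(\widetilde u_k,\noisykcixbir)$ to be large. The only external tool I would need is the compression inequality of Lemma~\ref{ks14}, taken in its upper-bound direction
\[
\| \mathbf M_{\Omega t} - \mathcal P_{\noisykcixbir_\Omega}\mathbf M_{\Omega:t}\| \;\le\; \sqrt{\tfrac{3d}{2m}}\,\| \mathbf M_{:t} - \mathcal P_{\noisykcixbir}\mathbf M_{:t}\|,
\]
exactly as it was already used in the dimension lemma above.

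First I would chain the hypothesis with this inequality. Combining
\[
(1+\epsilon)\Big(\sqrt{\tfrac{3d}{2m}}\,\theta(\noisykcixbir,\mathbf U^{k-1}) + \sqrt{\tfrac{3dk\epsilon}{2m}}\Big) \le \| \mathbf M_{\Omega t} - \mathcal P_{\noisykcixbir_\Omega}\mathbf M_{\Omega:t}\| \le \sqrt{\tfrac{3d}{2m}}\,\| \mathbf M_{:t} - \mathcal P_{\noisykcixbir}\mathbf M_{:t}\|
\]
and cancelling the common prefactor $\sqrt{3d/2m}$ leaves
\[
(1+\epsilon)\big(\theta(\noisykcixbir,\mathbf U^{k-1}) + \sqrt{k\epsilon}\big) \le \| \mathbf M_{:t} - \mathcal P_{\noisykcixbir}\mathbf M_{:t}\|.
\]
Next I would identify the full residual with the sine of the target angle: since $\mathbf M_{:t}=\widetilde u_k$ and $\|\mathbf M_{:t}\|\le 1+\epsilon$ (the clean column has unit norm and the added noise has $\ell_2$ norm at most $\epsilon$), we have $\| \mathbf M_{:t} - \mathcal P_{\noisykcixbir}\mathbf M_{:t}\| = \|\mathbf M_{:t}\|\,\sin\theta(\widetilde u_k,\noisykcixbir) \le (1+\epsilon)\,\theta(\widetilde u_k,\noisykcixbir)$, using $\sin x\le x$. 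Substituting this into the previous display and cancelling the factor $(1+\epsilon)$ on both sides yields exactly $\theta(\widetilde u_k,\noisykcixbir) \ge \theta(\noisykcixbir,\mathbf U^{k-1}) + \sqrt{k\epsilon}$.

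The argument is essentially mechanical, so the only thing I would watch carefully is that the factors line up with no slack. The $(1+\epsilon)$ built into the algorithm's threshold is precisely what absorbs the $(1+\epsilon)$ upper bound on $\|\mathbf M_{:t}\|$, and the $\sqrt{3d/2m}$ prefactors must match between the threshold and Lemma~\ref{ks14}, which is why the threshold was scaled that way in the first place; these exact cancellations are the one place where a mis-set constant would break the bound. The only modelling point to confirm is that the hypothesis is invoked at the instant the column is promoted to the basis, so that $\widetilde u_k$ genuinely equals $\mathbf M_{:t}$ and the projection is onto the previous estimate $\noisykcixbir$. No step requires the low-rank structure of $\mathbf L$ or any probabilistic input beyond what Lemma~\ref{ks14} already supplies.
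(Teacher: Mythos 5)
Your proposal is correct and follows essentially the same argument as the paper's proof: both use the upper-bound direction of Lemma~\ref{ks14} to relate the sampled residual to the full residual, identify that full residual with $\|\mathbf{M}_{:t}\|\sin\theta(\widetilde{u}_k,\noisykcixbir)$, and absorb the algorithm's $(1+\epsilon)$ threshold factor via $\|\mathbf{M}_{:t}\|\le 1+\epsilon$ together with $\sin x\le x$. The only difference is direction of the chaining (the paper starts from the angle and descends to the sampled residual, you ascend from the sampled residual to the angle), which is purely cosmetic.
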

\begin{proof} Note that simple triangle inequality implies $\| \mathbf{M}_{:t}\| \leq 1+\epsilon$

\begin{align*}
 \theta(\widetilde{u}_k, \noisykcixbir) = \theta(\mathbf{M}_{:t}, \noisykcixbir) &\geq \sin \theta(\mathbf{M}_{:t}, \noisykcixbir)  \\[1.3ex]
 &\geq\frac{\mathbf{M}_{:t}}{1+\epsilon} \sin \theta(\mathbf{M}_{:t}, \noisykcixbir) \\[1.3ex]
 &= \frac{1}{1+\epsilon} \| \mathbf{M}_{:t} - \mathcal{P}_{\noisykcixbir} \mathbf{M}_{:t} \| \\[1.3ex]
 &\geq \frac{1}{1+\epsilon} \sqrt{\frac{2m}{3d}}  \| \mathbf{M}_{\Omega t} - \mathcal{P}_{\noisykcixbir_{\Omega}}   \mathbf{M}_{\Omega:t} \| 
\end{align*}\\
Using, the fact that 
$\| \mathbf{M}_{\Omega t} - \mathcal{P}_{\noisykcixbir_{\Omega}}   \mathbf{M}_{\Omega:t} \| \geq (1+\epsilon)\Big( \sqrt{\frac{3d}{2m}} \theta(\noisykcixbir, \mathbf{U}^{k-1}) + \sqrt{\frac{3d k \epsilon}{2m}} \Big)$ we conclude $\theta(\widetilde{u}_k, \noisykcixbir) \geq  \sqrt{k \epsilon} + \theta(\noisykcixbir, {\mathbf{U}}^{k-1} )$.
\end{proof}

\begin{lemma} \label{kcoh}
Let $\mathbf{U}^k$ be a $k$-dimensional subspace of $\mathbf{U}$ which is subspace of $\mathbb{R}^m$ with dimension $r$. Then following inequality satisfied:
\begin{align*}
    k\mu(\mathbf{U}^k) \leq r \mu(\mathbf{U})
\end{align*}
\end{lemma}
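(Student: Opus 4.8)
The plan is to unwind the definition of the coherence parameter and reduce the claimed inequality to an elementary monotonicity statement about orthogonal projections onto nested subspaces. Writing the ambient dimension as $m$ (the common space $\mathbb{R}^m$ in which both $\mathbf{U}^k$ and $\mathbf{U}$ live), the definition gives
\begin{align*}
\mu(\mathbf{U}^k) = \frac{m}{k}\max_{1\le j\le m}\| \mathcal{P}_{\mathbf{U}^k} e_j\|^2, \qquad \mu(\mathbf{U}) = \frac{m}{r}\max_{1\le j\le m}\| \mathcal{P}_{\mathbf{U}} e_j\|^2.
\end{align*}
Multiplying the first by $k$ and the second by $r$ makes the dimension factors collapse, so that $k\mu(\mathbf{U}^k)=m\max_j\|\mathcal{P}_{\mathbf{U}^k}e_j\|^2$ and $r\mu(\mathbf{U})=m\max_j\|\mathcal{P}_{\mathbf{U}}e_j\|^2$. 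Thus the lemma is equivalent to the coordinate-wise projection bound $\max_j\|\mathcal{P}_{\mathbf{U}^k}e_j\|^2\le \max_j\|\mathcal{P}_{\mathbf{U}}e_j\|^2$.

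The key step is the observation that projecting onto a subspace can only increase the projected energy when the subspace grows. Since $\mathbf{U}^k\subseteq\mathbf{U}$, I would decompose $\mathbf{U}=\mathbf{U}^k\oplus\mathbf{W}$, where $\mathbf{W}$ is the orthogonal complement of $\mathbf{U}^k$ inside $\mathbf{U}$; then $\mathcal{P}_{\mathbf{U}}=\mathcal{P}_{\mathbf{U}^k}+\mathcal{P}_{\mathbf{W}}$ and, by the Pythagorean identity, for every standard basis vector $e_j$,
\begin{align*}
\|\mathcal{P}_{\mathbf{U}}e_j\|^2 = \|\mathcal{P}_{\mathbf{U}^k}e_j\|^2 + \|\mathcal{P}_{\mathbf{W}}e_j\|^2 \ge \|\mathcal{P}_{\mathbf{U}^k}e_j\|^2.
\end{align*}

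Finally I would pass to the maxima: choosing an index $j^\star$ attaining $\max_j\|\mathcal{P}_{\mathbf{U}^k}e_j\|^2$, the displayed inequality gives $\|\mathcal{P}_{\mathbf{U}}e_{j^\star}\|^2\ge\|\mathcal{P}_{\mathbf{U}^k}e_{j^\star}\|^2=\max_j\|\mathcal{P}_{\mathbf{U}^k}e_j\|^2$, and hence $\max_j\|\mathcal{P}_{\mathbf{U}}e_j\|^2\ge\max_j\|\mathcal{P}_{\mathbf{U}^k}e_j\|^2$, which is exactly the reduced inequality. There is no genuinely hard step here; the only points requiring care are that the ambient-dimension normalization factor is identical for the two coherences (so it cancels rather than introducing a spurious dimension-dependent factor), and that the two maxima are taken over the same index set, so that the pointwise bound survives the maximization. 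The monotonicity-of-projection fact is the substantive ingredient, and it holds precisely because $\mathbf{U}^k$ is nested inside $\mathbf{U}$.
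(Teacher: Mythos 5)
Your proof is correct and takes essentially the same route as the paper: both unwind the coherence definition so that the normalization factors cancel (i.e.\ $k\mu(\mathbf{U}^k)=m\max_j\|\mathcal{P}_{\mathbf{U}^k}e_j\|^2$ and $r\mu(\mathbf{U})=m\max_j\|\mathcal{P}_{\mathbf{U}}e_j\|^2$) and then conclude by monotonicity of projection norms for the nested subspaces $\mathbf{U}^k\subseteq\mathbf{U}$. The only difference is that you spell out that monotonicity via the orthogonal decomposition $\mathcal{P}_{\mathbf{U}}=\mathcal{P}_{\mathbf{U}^k}+\mathcal{P}_{\mathbf{W}}$ and the Pythagorean identity, whereas the paper simply asserts the inequality from the inclusion.
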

\begin{proof}
\begin{align*}
 k \mu(\mathbf{U}^k) = k \frac{m}{k}  \underset{1 \leq j \leq m}{\max} \|\mathcal{P}_{\mathbf{U}^k} e_i\|^2 &=
 r \frac{m}{r}  \underset{1 \leq j \leq m}{\max}  \|\mathcal{P}_{\mathbf{U}^k} e_i\|^2 \\
&\leq  r \frac{m}{r}   \underset{1 \leq j \leq m}{\max} \|\mathcal{P}_{\mathbf{U}} e_i\|^2 \\
 &= r\mu(\mathbf{U})   
\end{align*}
and the inequality due to $\mathbf{U}^k \subseteq \mathbf{U}$
\end{proof}

\begin{lemma} \label{ind}
Let assume that $a_0 = 0$ and $a_k \leq a_{k-1} + \frac{\pi}{2} \sqrt{\frac{\epsilon}{k}}$. Then it follows that $a_k \leq \frac{3\pi}{2} \sqrt{k\epsilon}$
\end{lemma}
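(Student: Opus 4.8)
The plan is to unroll the recursion and reduce the claim to a standard estimate for the partial sums of $\sum 1/\sqrt{i}$. Since $a_0 = 0$ and each step adds at most $\frac{\pi}{2}\sqrt{\epsilon/k}$, telescoping the hypothesis over $i=1,\dots,k$ gives
\[
a_k \;\leq\; \sum_{i=1}^{k} \frac{\pi}{2}\sqrt{\frac{\epsilon}{i}} \;=\; \frac{\pi}{2}\sqrt{\epsilon}\,\sum_{i=1}^{k}\frac{1}{\sqrt{i}}.
\]
Everything then hinges on bounding $\sum_{i=1}^{k} i^{-1/2}$. First I would compare this sum to an integral: since $x \mapsto x^{-1/2}$ is decreasing, $\sum_{i=1}^{k} i^{-1/2} \leq 1 + \int_1^{k} x^{-1/2}\,dx = 2\sqrt{k}-1 \leq 2\sqrt{k}$. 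Substituting back yields $a_k \leq \pi\sqrt{k\epsilon}$, which is already stronger than the claimed $\frac{3\pi}{2}\sqrt{k\epsilon}$, so the stated constant leaves comfortable slack.

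An alternative, fully self-contained route is induction on $k$. The base case $a_1 \leq \frac{\pi}{2}\sqrt{\epsilon} \leq \frac{3\pi}{2}\sqrt{\epsilon}$ is immediate. For the inductive step, assuming $a_{k-1} \leq \frac{3\pi}{2}\sqrt{(k-1)\epsilon}$, it suffices to verify
\[
\frac{3\pi}{2}\sqrt{(k-1)\epsilon} + \frac{\pi}{2}\sqrt{\frac{\epsilon}{k}} \;\leq\; \frac{3\pi}{2}\sqrt{k\epsilon},
\]
which after dividing by $\frac{\pi}{2}\sqrt{\epsilon}$ becomes $3\sqrt{k-1} + 1/\sqrt{k} \leq 3\sqrt{k}$. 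Rationalizing $\sqrt{k}-\sqrt{k-1} = 1/(\sqrt{k}+\sqrt{k-1})$ reduces this to $\sqrt{k}+\sqrt{k-1} \leq 3\sqrt{k}$, which is clear since $\sqrt{k-1}\leq\sqrt{k}$ (indeed $\sqrt{k}+\sqrt{k-1}\leq 2\sqrt{k}$, the reason the sharper constant $\pi$ also goes through).

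There is no genuine obstacle here: the only quantitative input is the elementary estimate $\sqrt{k}+\sqrt{k-1}\leq 2\sqrt{k}$, equivalently the integral bound on the partial sum, and the gap between $\pi$ and $\frac{3\pi}{2}$ makes any mildly lossy handling of the $i=1$ term harmless. The one point worth recording explicitly is that the recursion in the hypothesis is exactly the shape produced at the close of the preceding derivation, where the increment $\frac{\pi}{2}\,\frac{\epsilon}{\sqrt{k\epsilon}} = \frac{\pi}{2}\sqrt{\epsilon/k}$ arises; identifying $a_k = \theta(\noisyksub,\mathbf{U}^k)$ then lets this lemma close the bound on $\theta(\noisyksub,\mathbf{U}^k)$ and hence on $\|\widetilde{\mathbf{M}}_{:t}-\mathbf{L}_{:t}\|$.
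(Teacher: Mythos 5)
Your proposal is correct, and it contains one route that is essentially the paper's and one that is genuinely different. The paper proves the lemma by strong induction plus contradiction: assuming $a_{k+1} > \frac{3\pi}{2}\sqrt{(k+1)\epsilon}$ while $a_k \leq \frac{3\pi}{2}\sqrt{k\epsilon}$, it derives $a_{k+1}-a_k > \frac{3\pi}{2}\sqrt{\epsilon}\,\frac{1}{\sqrt{k}+\sqrt{k+1}} \geq \frac{\pi}{2}\sqrt{\epsilon/k}$, contradicting the recursion; the constant $3$ is exactly what the lossy comparison $\sqrt{k}+\sqrt{k+1} \leq 3\sqrt{k}$ demands there (with constant $2$ that particular comparison fails at $k=1$, since $1+\sqrt{2} > 2$). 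Your second argument is this same induction run directly rather than by contradiction, rationalizing $\sqrt{k}-\sqrt{k-1} = 1/(\sqrt{k}+\sqrt{k-1})$, and your parenthetical observation is accurate: the direct step only needs $\sqrt{k}+\sqrt{k-1} \leq 2\sqrt{k}$, so the induction already closes with the sharper constant $\pi$ in place of $\frac{3\pi}{2}$. Your first argument — telescoping to $a_k \leq \frac{\pi}{2}\sqrt{\epsilon}\sum_{i=1}^{k} i^{-1/2}$ and invoking the integral comparison $\sum_{i=1}^{k} i^{-1/2} \leq 1 + \int_1^k x^{-1/2}\,dx \leq 2\sqrt{k}$ — is a different and cleaner mechanism than the paper's, needs no induction hypothesis at all, and likewise yields $a_k \leq \pi\sqrt{k\epsilon}$; this improvement would propagate to the bound $\theta(\widetilde{\mathbf{U}}^k,\mathbf{U}^k) \leq \pi\sqrt{k\epsilon}$ and thence to $\|\widetilde{\mathbf{M}}_{:t}-\mathbf{L}_{:t}\|$, though no downstream step in the paper depends on the constant. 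Both of your arguments are complete; the only point left tacit, harmlessly, is that the recursive hypothesis is assumed for every index $k \geq 1$, which is what licenses the telescoping.
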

\begin{proof}
Its trivial to notice that $a_{1}\leq \frac{\pi}{2} \sqrt{\epsilon} \leq 3\frac{\pi}{2} \sqrt{\epsilon} $.
Lets assume by induction for a given $k$ any index $i\leq k$ satisfies $a_i \leq 3\frac{\pi}{2}\sqrt{i\epsilon}$ and then we will prove that $a_{k+1} \leq 3 \frac{\pi}{2}\sqrt{(k+1)\epsilon}$.
We prove it by contradiction, by assuming $a_{k+1} > 3 \frac{\pi}{2}\sqrt{(k+1)\epsilon}$ and conclude to a contradiction.
\begin{align*}
    a_{k+1} &> 3 \frac{\pi}{2}\sqrt{(k+1)\epsilon} \\ 
     - a_k  &\geq -3 \frac{\pi}{2}\sqrt{k\epsilon}
\end{align*}
Therefore, $a_{k+1} - a_k \geq 3\frac{\pi}{2}\sqrt{\epsilon}\Big( \sqrt{k+1}-\sqrt{k} \Big) =
3\frac{\pi}{2}\sqrt{\epsilon} \frac{1}{\sqrt{k}+\sqrt{k+1}} \geq 3\frac{\pi}{2}\sqrt{\epsilon} \frac{1}{3 \sqrt{k}} = \frac{\pi}{2} \sqrt{\frac{\epsilon}{k}}$ which contradicts to the statement of the lemma.
Therefore, assumption cannot be satisfied which follows  $a_{k+1} \leq 3 \frac{\pi}{2}\sqrt{(k+1)\epsilon}$
\end{proof}

\begin{lemma} \label{noisycoh}
Let $\noisyksub$ and $\mathbf{U}^k$ be as defined above then, coherence number of these spaces satisfies the following inequality:
\begin{align*}
\mu(\noisyksub) \leq  2 \mu(\mathbf{U}^k) + 2 \frac{m}{k}\theta(\noisyksub,\mathbf{U}^k)^2
\end{align*}
\end{lemma}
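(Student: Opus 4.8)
The plan is to reduce the claim to a coordinatewise estimate on the projection operators and then exploit the closeness of the two subspaces. Unfolding the coherence definition in the ambient space $\mathbb{R}^m$ with $k$-dimensional subspaces gives $\mu(\noisyksub) = \frac{m}{k}\max_{1\le j\le m}\|\mathcal{P}_{\noisyksub}e_j\|^2$ and likewise $\mu(\mathbf{U}^k) = \frac{m}{k}\max_{1\le j\le m}\|\mathcal{P}_{\mathbf{U}^k}e_j\|^2$, so it suffices to bound $\|\mathcal{P}_{\noisyksub}e_j\|^2$ by a multiple of $\|\mathcal{P}_{\mathbf{U}^k}e_j\|^2$ plus a correction, uniformly in $j$. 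By the preceding dimension lemma both subspaces have dimension exactly $k$, which is what lets me treat them as equal-dimensional and directly compare their projectors.

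First I would fix a standard basis vector $e_j$ and write the operator triangle inequality
\begin{align*}
\|\mathcal{P}_{\noisyksub}e_j\| \le \|\mathcal{P}_{\mathbf{U}^k}e_j\| + \|(\mathcal{P}_{\noisyksub}-\mathcal{P}_{\mathbf{U}^k})e_j\| \le \|\mathcal{P}_{\mathbf{U}^k}e_j\| + \|\mathcal{P}_{\noisyksub}-\mathcal{P}_{\mathbf{U}^k}\|,
\end{align*}
using $\|e_j\|=1$. The crucial ingredient is the perturbation identity for orthogonal projections onto subspaces of equal dimension, namely $\|\mathcal{P}_{\noisyksub}-\mathcal{P}_{\mathbf{U}^k}\| = \sin\theta(\noisyksub,\mathbf{U}^k)$, where the angle on the right is the largest principal angle between the two subspaces; since $\sin x \le x$, this is at most $\theta(\noisyksub,\mathbf{U}^k)$. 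I would also check that the paper's asymmetric angle $\theta(\noisyksub,\mathbf{U}^k)=\max_{u\in\noisyksub}\theta(u,\mathbf{U}^k)$ agrees with (or dominates) the largest principal angle, so that the standard operator-norm bound applies.

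Then I would square the displayed inequality and apply the elementary bound $(a+b)^2 \le 2a^2 + 2b^2$ to obtain
\begin{align*}
\|\mathcal{P}_{\noisyksub}e_j\|^2 \le 2\|\mathcal{P}_{\mathbf{U}^k}e_j\|^2 + 2\,\theta(\noisyksub,\mathbf{U}^k)^2.
\end{align*}
Taking the maximum over $j$ and multiplying through by $\frac{m}{k}$, the first term becomes $2\mu(\mathbf{U}^k)$ and the second becomes $2\frac{m}{k}\theta(\noisyksub,\mathbf{U}^k)^2$, which is exactly the claimed inequality.

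The main obstacle I expect is not the algebra but the projection-perturbation step: pinning down the right notion of angle so that $\|\mathcal{P}_{\noisyksub}-\mathcal{P}_{\mathbf{U}^k}\|$ is genuinely controlled by $\theta(\noisyksub,\mathbf{U}^k)$. One must use that the two projectors act on equal-dimensional subspaces (guaranteed by the earlier lemma) and that the max-min angle in the paper's notation dominates each principal angle; without equal dimensions the clean operator-norm identity fails and one only recovers a weaker one-sided estimate. Everything after that is the routine $(a+b)^2\le 2a^2+2b^2$ expansion and taking coordinatewise maxima.
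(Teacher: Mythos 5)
Your proposal is correct and takes essentially the same route as the paper's proof: the pointwise triangle inequality $\| \projnoisy e_j \| \leq \| \mathcal{P}_{\mathbf{U}^k} e_j \| + \| \projnoisy - \mathcal{P}_{\mathbf{U}^k} \|$, the bound $\| \projnoisy - \mathcal{P}_{\mathbf{U}^k} \| \leq \sin \theta(\noisyksub, \mathbf{U}^k) \leq \theta(\noisyksub, \mathbf{U}^k)$, and the expansion $(a+b)^2 \leq 2a^2 + 2b^2$ before taking the maximum over $j$ and multiplying by $\frac{m}{k}$. If anything you are more careful than the paper, which asserts the projector-difference equals $\sin\theta$ without noting the equal-dimension caveat you flag.
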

\begin{proof}
In order to achieve the goal of comparing $\mu(\noisyksub)$ and $\mu(\mathbf{U}^k)$, we first need to understand how projection to standard vectors to $\mathbf{U}^k$ differ than projection of them to $\noisyksub$:
\begin{align*}
  \|    \projnoisy e_i \| \leq  \| \mathcal{P}_{\mathbf{U}^k} e_i \| +  \| \projnoisy e_i - \mathcal{P}_{\mathbf{U}^k} e_i \|  &\leq  \| \mathcal{P}_{\mathbf{U}^k} e_i \| +  \|  \projnoisy  - \mathcal{P}_{\mathbf{U}^k}   \|  \| e_i \| \\ 
  &=  \| \mathcal{P}_{\mathbf{U}^k} e_i \| +  \sin \theta(\noisyksub, \mathbf{U}^k) \\
  &\leq   \| \mathcal{P}_{\mathbf{U}^k} e_i \| +   \theta(\noisyksub, \mathbf{U}^k) 
\end{align*}
Therefore:
\begin{align*}
\mu(\noisyksub) = \frac{m}{k} \underset{1 \leq j \leq n}{\max} \| \projnoisy e_i \|^2 &\leq \frac{m}{k} \Big( 2 \underset{1 \leq j \leq n}{\max}  \| \mathcal{P}_{\mathbf{U}^k}e_i \|^2 + 2 \theta(\noisyksub,\mathbf{U}^k)^2 \Big) \\[1.3ex] 
&= 2 \mu(\mathbf{U}^k) + 2 \frac{m}{k}\theta(\noisyksub,\mathbf{U}^k)^2
\end{align*}
\end{proof}

\begin{lemma} \label{ededler}
Lets assume the setting as discussed in the proof above. 
Then,  
\begin{align*}
    \frac{d}{4m} > \frac{18r}{m}\mu(\mathbf{U}) \log^2{\frac{1}{\delta}} + 18\theta(\noisyksub,\mathbf{U}^k)^2 \log^2{\frac{1}{\delta}}
\end{align*}
\end{lemma}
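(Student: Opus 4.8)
The plan is to prove the inequality by substituting the value of $d$ fixed in the theorem and algorithm, namely $d = 72\mu(\mathbf{U})r\log^2\frac{1}{\delta} + 8m\,\theta(\noisyksub,\mathbf{U}^k)^2\log\frac{r}{\delta}$, directly into the left-hand side and then matching it against the right-hand side term by term. Dividing by $4m$ splits $\frac{d}{4m}$ into a coherence contribution $\frac{18\mu(\mathbf{U})r\log^2\frac{1}{\delta}}{m}$ and an angle contribution $2\,\theta(\noisyksub,\mathbf{U}^k)^2\log\frac{r}{\delta}$. The coherence contribution is identical to the first summand on the right-hand side, so after cancelling it the entire statement collapses to controlling the angle terms, i.e.\ to showing that the angle part of $\frac{d}{4m}$ dominates $18\,\theta(\noisyksub,\mathbf{U}^k)^2\log^2\frac{1}{\delta}$.

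Next I would isolate the residual scalar inequality. Factoring out the common nonnegative quantity $\theta(\noisyksub,\mathbf{U}^k)^2$, the claim reduces to a comparison between $2\log\frac{r}{\delta}$ and $18\log^2\frac{1}{\delta}$. This is precisely where the standing hypotheses of the running proof must be brought in: $\delta<0.1$ forces $\log\frac{1}{\delta}>1$, and the assumption $\delta\le r^{-1/8}$ gives $\log r\le 8\log\frac{1}{\delta}$, hence $\log\frac{r}{\delta}\le 9\log\frac{1}{\delta}$, so the two logarithmic scales can be compared with explicit constants. The degenerate case $\theta(\noisyksub,\mathbf{U}^k)=0$ is handled separately, since there both angle contributions vanish and the coherence terms already account for the whole bound. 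I would also record, using Lemma~\ref{ind} ($\theta(\noisyksub,\mathbf{U}^k)\le \frac{3\pi}{2}\sqrt{k\epsilon}$ with $\epsilon$ small), that the angle contribution is a lower-order correction to the coherence budget, which is the intuition for why the chosen $d$ is comfortably large enough to absorb the subtracted terms $\frac{18r}{m}\mu(\mathbf{U})\log^2\frac{1}{\delta} + 18\,\theta(\noisyksub,\mathbf{U}^k)^2\log^2\frac{1}{\delta}$ and still leave a surplus of $\frac{d}{4m}$.

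The main obstacle is exactly this logarithmic bookkeeping in the second step: the target carries a squared logarithm $\log^2\frac{1}{\delta}$, whereas the angle part of $d$ carries only a single $\log\frac{r}{\delta}$, so the constants and the interplay between the $\log\frac{r}{\delta}$ and $\log\frac{1}{\delta}$ scales must be tracked very carefully, through $\delta<0.1$ and $\delta\le r^{-1/8}$, to make the comparison come out in the intended direction. Everything preceding and following this scalar comparison is routine substitution and cancellation; the delicacy is entirely concentrated in reconciling the single- and squared-logarithm factors with matching constants.
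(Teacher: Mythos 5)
Your reduction is algebraically correct up to the decisive step, and that is exactly where the proposal breaks. Dividing $d = 72\mu(\mathbf{U})r\log^2\frac{1}{\delta} + 8m\,\theta(\noisyksub,\mathbf{U}^k)^2\log\frac{r}{\delta}$ by $4m$ indeed gives the coherence part $\frac{18r}{m}\mu(\mathbf{U})\log^2\frac{1}{\delta}$, which cancels the first right-hand term, and the angle part $2\,\theta(\noisyksub,\mathbf{U}^k)^2\log\frac{r}{\delta}$, so the lemma reduces to the scalar comparison $2\log\frac{r}{\delta} > 18\log^2\frac{1}{\delta}$. But the hypotheses you invoke refute this rather than prove it: $\delta \le r^{-1/8}$ gives $\log r \le 8\log\frac{1}{\delta}$, hence $\log\frac{r}{\delta} \le 9\log\frac{1}{\delta}$, and $\delta < 0.1$ gives $\log\frac{1}{\delta} > 1$, so in fact $2\log\frac{r}{\delta} \le 18\log\frac{1}{\delta} < 18\log^2\frac{1}{\delta}$ --- the opposite direction. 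Your closing paragraph concedes that making ``the comparison come out in the intended direction'' is the main obstacle, but a proof must resolve its crux, and with the facts you marshal the residual inequality is simply false. The appeal to Lemma~\ref{ind} is also idle here: $\theta(\noisyksub,\mathbf{U}^k)^2$ factors out of both angle terms, so no bound on the angle itself can help; and in your degenerate case $\theta(\noisyksub,\mathbf{U}^k)=0$ the cancellation yields equality, not the stated strict inequality.

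The paper closes this step by a different bookkeeping: it asserts $\frac{d}{4m} = 18\mu(\mathbf{U})r\log^2\frac{1}{\delta} + 2m\,\theta(\noisyksub,\mathbf{U}^k)^2\log\frac{r}{\delta}$ --- note the surviving factor of $m$ in the angle term (as written, this identity is really $d/4$, not $d/(4m)$) --- and then reduces to $2m\log\frac{r}{\delta} > 18\log^2\frac{1}{\delta}$, which it settles via $m \ge 9\log\frac{1}{\delta}$ together with $\log\frac{r}{\delta} \ge \log\frac{1}{\delta}$. Your division is the correct one, and it exposes that the factor of $m$ on which the paper's argument leans is not actually available after dividing by $4m$; without it there is no mechanism to dominate the squared logarithm. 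So the gap in your proposal is genuine and cannot be repaired along the proposed route: one would need either an angle term in $d$ large enough that a factor of order $m$ survives the division by $4m$, or a hypothesis forcing $\log\frac{r}{\delta} \ge 9\log^2\frac{1}{\delta}$, which is incompatible with $\delta \le r^{-1/8}$ once $\delta < e^{-1}$.
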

\begin{proof}
Remind $d = 72 \mu(\mathbf{U}) r\log^2{\frac{1}{\delta}} + 8 m \theta(\noisyksub,\mathbf{U}^k)^2 \log{\frac{r}{\delta}} $ and it  implies
$\frac{d}{4m} = 18 \mu(\mathbf{U}) r\log^2{\frac{1}{\delta}} + 2 m \theta(\noisyksub,\mathbf{U}^k)^2 \log{\frac{r}{\delta}} $.
Then all we need to show is $2m \log{\frac{r}{\delta}} > 18 \log^2{\frac{1}{\delta}}$.
However, we always pick $\delta$ as $9 \log{\frac{1}{\delta}} < m$, simply because $m\geq d\geq 9\log{\frac{1}{\delta}}$  
\end{proof}

\vspace{3mm}

\begin{lemma} [\bf \cite{blum}] \label{blum}
Let subspaces $U$, $V$, and $\widetilde{V}$ defined as $U = span\{ a_1,\ldots, a_{k-1}\}$, $V = span\{ a_1,\ldots, a_{k-1},b\}$ and $\widetilde{V}= span\{ a_1,\ldots, a_{k-1}, \tilde{b}\}$. 
Then following inequality satisfied:
\begin{align*}
\theta( V, \widetilde{V}) \leq \frac{\pi}{2} \frac{\theta(\tilde{b},b)}{\theta(\tilde{b},U)}    
\end{align*}

\end{lemma}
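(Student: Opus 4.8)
The plan is to reduce the principal-angle inequality to a single two-vector estimate and then control the angular amplification created by projecting out the common subspace $U$. The key structural fact is that $V$ and $\widetilde{V}$ agree on $U$ and differ only in one adjoined direction, so the whole discrepancy is carried by the components of $b$ and $\tilde{b}$ orthogonal to $U$.

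First I would pass to orthogonal components. Writing $b = \mathcal{P}_U b + b_\perp$ and $\tilde{b} = \mathcal{P}_U \tilde{b} + \tilde{b}_\perp$ with $b_\perp,\tilde{b}_\perp \in U^\perp$, we have $V = U \oplus \mathrm{span}(b_\perp)$ and $\widetilde{V} = U \oplus \mathrm{span}(\tilde{b}_\perp)$, since adjoining $b$ to $U$ is the same as adjoining $b_\perp$. I claim $\theta(V,\widetilde{V}) = \theta(b_\perp,\tilde{b}_\perp)$ exactly. To see this, take a unit $w \in V$ and decompose $w = u + \alpha\,\hat{b}_\perp$ with $u \in U$ and $\hat{b}_\perp = b_\perp/\|b_\perp\|$. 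Because $u \in \widetilde{V}$ and $\hat{b}_\perp \perp U$, the projection onto $\widetilde{V}$ is $\mathcal{P}_{\widetilde{V}}w = u + \alpha \langle \hat{b}_\perp, \hat{\tilde{b}}_\perp\rangle \hat{\tilde{b}}_\perp$, whence $\mathrm{dist}(w,\widetilde{V}) = |\alpha|\,\sin\theta(b_\perp,\tilde{b}_\perp)$, which is maximized at $\alpha = 1$ (i.e. $u=0$). This identifies the largest principal angle with $\theta(b_\perp,\tilde{b}_\perp)$ and removes $U$ from the problem.

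Next comes the amplification estimate, which is the heart of the argument. Normalizing $\|b\| = \|\tilde{b}\| = 1$ and using that $\mathcal{P}_{U^\perp}$ is $1$-Lipschitz, we get $\|b_\perp - \tilde{b}_\perp\| \le \|b - \tilde{b}\|$. Since $\|\tilde{b}_\perp\| = \sin\theta(\tilde{b},U)$, the angle subtended at the origin between $\tilde{b}_\perp$ and any point within distance $\|b-\tilde{b}\|$ of it is controlled by the tangent-to-ball bound, giving $\sin\theta(b_\perp,\tilde{b}_\perp) \le \|b-\tilde{b}\|/\sin\theta(\tilde{b},U)$. This is precisely where $\theta(\tilde{b},U)$ enters the denominator: the nearer $\tilde{b}$ lies to $U$, the shorter $\tilde{b}_\perp$ is, and the more a fixed perturbation $\|b-\tilde{b}\|$ swings its direction. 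Converting the sine bound to an angle bound and chord to arc then yields
\begin{align*}
\theta(V,\widetilde{V}) = \theta(b_\perp,\tilde{b}_\perp) \le \frac{\pi}{2}\sin\theta(b_\perp,\tilde{b}_\perp) \le \frac{\pi}{2}\,\frac{\|b-\tilde{b}\|}{\sin\theta(\tilde{b},U)} \le \frac{\pi}{2}\,\frac{\theta(\tilde{b},b)}{\theta(\tilde{b},U)},
\end{align*}
where the last step uses the Jordan-type comparisons $\frac{2}{\pi}\theta \le \sin\theta \le \theta$ valid in the small-angle regime in which the lemma is applied (here $\theta(\tilde{b},b) = \theta(u_k,\tilde{u}_k) \le \epsilon$).

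The main obstacle is this second step: obtaining the amplification factor with the sharp constant $\pi/2$ rather than a worse constant. The naive route loses a factor of $\pi/2$ twice (once in $\sin\theta(b_\perp,\tilde{b}_\perp)\to\theta(b_\perp,\tilde{b}_\perp)$ and once in $\frac{1}{\sin\theta(\tilde{b},U)}\to\frac{1}{\theta(\tilde{b},U)}$), so care is needed to apply the tight tangent-to-ball bound and to spend the slack only once; this is exactly the regime where the estimate is delicate and where it degrades as $\theta(\tilde{b},U)\to 0$. The first (reduction) step, by contrast, is an exact orthogonal-decomposition computation and should go through cleanly.
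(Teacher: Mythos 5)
The paper never proves this lemma; it is imported from \cite{blum} as a black box, so your attempt can only be judged on its own merits. Your first two steps are sound: the orthogonal-decomposition reduction $\theta(V,\widetilde V)=\theta(b_\perp,\tilde b_\perp)$ is an exact computation, and the estimate $\sin\theta(b_\perp,\tilde b_\perp)\le \|b-\tilde b\|/\|\tilde b_\perp\|=\|b-\tilde b\|/\sin\theta(\tilde b,U)$ is correct. The genuine gap is the \emph{last} inequality of your displayed chain, $\frac{\pi}{2}\,\frac{\|b-\tilde b\|}{\sin\theta(\tilde b,U)}\le \frac{\pi}{2}\,\frac{\theta(\tilde b,b)}{\theta(\tilde b,U)}$: since $\sin\psi\le\psi$, replacing $\sin\theta(\tilde b,U)$ by $\theta(\tilde b,U)$ in a \emph{denominator} makes the right-hand side smaller, not larger, so the step goes the wrong way. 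Quantitatively, writing $\theta=\theta(\tilde b,b)$, $\psi=\theta(\tilde b,U)$, and $\|b-\tilde b\|=2\sin(\theta/2)$ for unit vectors, your asserted step is equivalent to $\frac{\sin(\theta/2)}{\theta/2}\le\frac{\sin\psi}{\psi}$, which by monotonicity of $\sin x/x$ holds only when $\psi\le\theta/2$ --- the exact opposite of the regime in which the paper invokes the lemma, where $\theta(u_k,\tilde u_k)$ is tiny while $\theta(\tilde u_k,\widetilde{\mathbf{U}}^{k-1})$ is bounded below by $\sqrt{k\epsilon}+\theta(\widetilde{\mathbf{U}}^{k-1},\mathbf{U}^{k-1})$ (Lemma \ref{conc}). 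Concretely, in $\mathbb{R}^3$ take $U=\mathrm{span}(e_1)$, $\tilde b=\cos\psi\, e_1+\sin\psi\, e_2$ with $\psi=\pi/3$, and $b$ the rotation of $\tilde b$ by $\theta=0.1$ toward $e_3$: your penultimate quantity is $\approx 0.181$ while the claimed bound is $\frac{\pi}{2}\cdot\frac{0.1}{\pi/3}=0.15$, so the chain breaks even though the lemma's conclusion holds there ($\theta(V,\widetilde V)\approx 0.115$).

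You do flag this as ``the main obstacle'' (spending the $\pi/2$ slack twice), but flagging is not resolving: what your chain honestly yields is $\theta(V,\widetilde V)\le\frac{\pi}{2}\,\frac{\theta(\tilde b,b)}{\sin\theta(\tilde b,U)}$, hence constant $\frac{\pi^2}{4}$ after a second Jordan inequality --- strictly weaker than the stated $\frac{\pi}{2}$, which is tight (as $\psi\to\pi/2$ and $\theta\to 0$ one has $\theta(V,\widetilde V)\to\theta$ while the bound tends to $\theta$, so there is no slack left to spend twice). A correct single-$\pi/2$ argument along your lines would: (i) sharpen the perturbation step to $\sin\theta(b_\perp,\tilde b_\perp)\le \sin\theta(b,\tilde b)/\sin\theta(\tilde b,U)$, by bounding $\mathrm{dist}\bigl(\tilde b_\perp,\mathrm{span}(b_\perp)\bigr)\le\|\mathcal{P}_{U^\perp}(\tilde b-cb)\|\le\|\tilde b-cb\|$ and minimizing over $c$, so the numerator is a sine rather than a chord; (ii) dispose of the case $\theta\ge\psi$ trivially, since then $\frac{\pi}{2}\frac{\theta}{\psi}\ge\frac{\pi}{2}\ge\theta(V,\widetilde V)$; and (iii) for $\theta<\psi$ use that $\psi\mapsto\frac{\sin(t\psi)}{\sin\psi}$ is increasing on $(0,\pi/2]$ for fixed $t\in(0,1)$, whence $\frac{\sin\theta}{\sin\psi}\le\sin\bigl(\frac{\pi}{2}\frac{\theta}{\psi}\bigr)$ and thus $\theta(V,\widetilde V)\le\frac{\pi}{2}\frac{\theta}{\psi}$ by monotonicity of $\sin$ on $[0,\pi/2]$. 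Your reduction step and the $1$-Lipschitz projection observation can stand unchanged.
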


\vspace{3mm}

\begin{lemma} [\bf \cite{akshay2}] \label{ks14}
Let $\noisyksub$ be a $k$-dimensional subspace of $\mathbb{R}^m$, and set 
$d = \mathrm{max} ( \frac{8}{3}k\mu(\noisyksub)\log{\frac{2k}{\delta}}, 4 \mu \mathcal{P}_{\noisyksub}  \log{\frac{1}{\delta}})$.
Given that $\Omega$ stands for uniformly selected subset of $[m]$ then following inequality get satisfied:
$$\frac{d(1-\alpha) -  k\mu(\noisyksub ) \frac{\beta}{1-\zeta} }{m}\|y-\projnoisy y \|  \leq \| y_{\Omega}- \mathcal{P}_{\noisyksub_{\Omega}}  y_{\Omega}\| \leq (1+\alpha) \frac{d}{m} \|y- \projnoisy y \|$$
where $\alpha =  \sqrt{ 2 \frac{ \mu(\mathcal{P}_{\widetilde{\mathbf{U}}^{k^\perp}} y)}{d} \log{\frac{1}{\delta}} } + 2\frac{\mu ( \mathcal{P}_{\widetilde{\mathbf{U}}^{k^\perp}} y)} {3d} \log{\frac{1}{\delta}}$  ,  $\beta=(1+2\log{1/\delta})^2$ and $\zeta = \sqrt{ \frac{8k \mu(\widetilde{\mathbf{U}}^{k^\perp})}{3d} \log{\frac{2r}{\delta}} }$
\end{lemma}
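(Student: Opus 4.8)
The plan is to prove the two inequalities separately, handling the upper bound first (it needs only a scalar concentration estimate) and then the more delicate lower bound (which additionally requires controlling a cross term and the conditioning of the subsampled basis). Throughout I write $w = y - \projnoisy y = \mathcal{P}_{\widetilde{\mathbf{U}}^{k^\perp}} y$ for the full residual, so that $\|y - \projnoisy y\| = \|w\|$, and I let $\widetilde{u}_i$ denote the $i$-th row of $\noisyksub$. The index set $\Omega$ is drawn uniformly from $[m]$, so sums over $\Omega$ have expectation $\frac{d}{m}$ times the corresponding full sum.

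For the upper bound, I would use that $\mathcal{P}_{\noisyksub_\Omega} y_\Omega$ is by construction the closest point to $y_\Omega$ inside the restricted subspace $\noisyksub_\Omega$. Since $(\projnoisy y)_\Omega$ lies in $\noisyksub_\Omega$, this gives $\| y_\Omega - \mathcal{P}_{\noisyksub_\Omega} y_\Omega \| \leq \| (y - \projnoisy y)_\Omega \| = \| w_\Omega \|$. Then $\| w_\Omega \|^2 = \sum_{i \in \Omega} w_i^2$ is a sum of bounded independent-like terms with $\max_i w_i^2 = \frac{\mu(\mathcal{P}_{\widetilde{\mathbf{U}}^{k^\perp}} y)}{m}\|w\|^2$ by the definition of the residual coherence, so a Bernstein inequality yields $\| w_\Omega \|^2 \leq (1+\alpha)^2 \frac{d}{m}\|w\|^2$ with exactly the stated $\alpha$, giving the upper inequality.

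For the lower bound I would decompose $y = \projnoisy y + w$ and observe that on $\Omega$ the component $(\projnoisy y)_\Omega$ already lies in $\noisyksub_\Omega$, hence is annihilated by $I - \mathcal{P}_{\noisyksub_\Omega}$; therefore $\| y_\Omega - \mathcal{P}_{\noisyksub_\Omega} y_\Omega \| = \| w_\Omega - \mathcal{P}_{\noisyksub_\Omega} w_\Omega \| \geq \| w_\Omega \| - \| \mathcal{P}_{\noisyksub_\Omega} w_\Omega \|$. The first term is handled by the same scalar concentration as above, now in the downward direction, contributing the factor $(1-\alpha)$. The crux is to bound the cross term $\| \mathcal{P}_{\noisyksub_\Omega} w_\Omega \|$, and I expect this to be the main obstacle. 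Writing $\mathcal{P}_{\noisyksub_\Omega} w_\Omega = \noisyksub_\Omega (\noisyksub_\Omega^\top \noisyksub_\Omega)^{-1} \noisyksub_\Omega^\top w_\Omega$, I would control the numerator $\noisyksub_\Omega^\top w_\Omega = \sum_{i \in \Omega} \widetilde{u}_i w_i$, which has zero mean because the full sum $\noisyksub^\top w = 0$ (as $w \perp \noisyksub$); a vector Bernstein bound, with per-term norms governed by $\max_i \|\widetilde{u}_i\| \leq \sqrt{k\mu(\noisyksub)/m}$, produces the $k\mu(\noisyksub)\beta$ factor. Simultaneously I would lower-bound the smallest singular value $\sigma_k(\noisyksub_\Omega)$ by a matrix Chernoff bound, valid precisely because $d \geq \frac{8}{3}k\mu(\noisyksub)\log\frac{2k}{\delta}$, showing $\sigma_k(\noisyksub_\Omega)^2 \geq \frac{d}{m}(1-\zeta)$ with probability $1-\delta$, which is where $\zeta$ enters through $\|(\noisyksub_\Omega^\top\noisyksub_\Omega)^{-1}\|$.

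Combining the three estimates gives the subtracted term $\frac{k\mu(\noisyksub)\beta/(1-\zeta)}{m}\|w\|$ in the lower bound, and a union bound over the scalar, vector, and matrix concentration events (calibrated so each fails with probability at most a constant fraction of $\delta$) delivers the two-sided inequality. The difficulty is concentrated entirely in the lower bound's cross term, because the matrix bound on $\sigma_k$ and the vector bound on the mean-zero sum must hold on the same random $\Omega$ and both degrade as the coherence $\mu(\noisyksub)$ grows; it is exactly the threshold on $d$ in the hypothesis that keeps $\zeta$ bounded away from $1$ and makes the Gram matrix invertible, so the bulk of the work is verifying that these concentration inequalities apply under the stated sampling budget.
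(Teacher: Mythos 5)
The paper never proves this lemma---it is imported wholesale from \cite{akshay2}---so your sketch should be measured against the original Krishnamurthy--Singh argument, and on that score your architecture is exactly theirs: the best-approximation property of $\mathcal{P}_{\noisyksub_{\Omega}}$ plus scalar Bernstein for the upper bound; and for the lower bound, annihilation of $(\projnoisy y)_\Omega$ inside $\noisyksub_{\Omega}$, a vector Bernstein bound on the mean-zero sum $(\noisyksub_{\Omega})^\top w_\Omega$ (mean zero precisely because $w \perp \noisyksub$, with per-term norms controlled by $\sqrt{k\mu(\noisyksub)/m}$), and a matrix Chernoff bound giving $\lambda_{\min}\bigl((\noisyksub_{\Omega})^\top \noisyksub_{\Omega}\bigr) \geq (1-\zeta)\frac{d}{m}$, which is exactly where the sampling budget $d \geq \frac{8}{3}k\mu(\noisyksub)\log\frac{2k}{\delta}$ is consumed. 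You also correctly identified the cross term as the crux and the union bound over the three events as the glue.

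One step in your combination, however, would not land on the stated display. You lower-bound $\|w_\Omega - \mathcal{P}_{\noisyksub_{\Omega}} w_\Omega\|$ by the triangle inequality $\|w_\Omega\| - \|\mathcal{P}_{\noisyksub_{\Omega}} w_\Omega\|$, which produces a bound of the form $\bigl(\sqrt{(1-\alpha)d/m} - \sqrt{k\mu(\noisyksub)\beta/((1-\zeta)m)}\bigr)\|w\|$, not the additive coefficient $\frac{d(1-\alpha) - k\mu(\noisyksub)\beta/(1-\zeta)}{m}$. The source instead uses the exact Pythagorean identity available because $\mathcal{P}_{\noisyksub_{\Omega}}$ is an orthogonal projection,
\begin{align*}
\|w_\Omega - \mathcal{P}_{\noisyksub_{\Omega}} w_\Omega\|^2 = \|w_\Omega\|^2 - \|\mathcal{P}_{\noisyksub_{\Omega}} w_\Omega\|^2, \qquad
\|\mathcal{P}_{\noisyksub_{\Omega}} w_\Omega\|^2 \leq \frac{\|(\noisyksub_{\Omega})^\top w_\Omega\|^2}{\lambda_{\min}\bigl((\noisyksub_{\Omega})^\top \noisyksub_{\Omega}\bigr)} \leq \frac{k\mu(\noisyksub)\beta}{(1-\zeta)m}\|w\|^2,
\end{align*}
and only at the level of \emph{squared} norms do the stated constants come out. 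Indeed, in \cite{akshay2} both sides of the sandwich bound $\|\cdot\|^2$ quantities; the version transcribed in this paper has dropped the squares (and, relatedly, the scalar Bernstein upper bound gives $(1+\alpha)\frac{d}{m}\|w\|^2$, not the $(1+\alpha)^2$ you wrote). Replace your triangle-inequality step with the Pythagorean identity, carry squared norms throughout, and your outline becomes the source's proof essentially verbatim.
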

as noted in the paper, this lemma can be used by $\alpha < 1/2$ and $\gamma < 1/3$
\vspace{3mm}
\begin{lemma}  [\bf \cite{eigen}] \label{matcher}
Consider a finite sequence $\{ \mathbf{X}_k \} \in \mathbb{R}^{n\times n} $ independent random, Hermitian matrices those satisfies: 
\begin{align*}
    0 \leq \lambda_{\mathrm{min}}(\mathbf{X}_k) \leq \lambda_{\mathrm{max}}(\mathbf{X}_k) \leq  L.
\end{align*}
Let $\mathbf{Y} = \sum\limits_k \mathbf{X}_k$ and $\mu_r$ be the $r$-th largest eigenvalue of  $\mathbb{E}[\mathbf{Y}]$ ( $\mu_r =\lambda_r ( \mathbb{E}[\mathbf{Y}])$), then for any $\epsilon \in[0,1)$ following inequality  satisfied:
\begin{align*}
    \mathbf{Pr}( \lambda_r(\mathbf{Y}) \geq (1-\epsilon) \mu_r ) \geq 1-r \Big( \frac{e^{-\epsilon}}{(1-\epsilon)^{1-\epsilon}} \Big)^{\frac{\mu_r}{L}} \geq 1-r e^{\frac{\mu_r \epsilon^2 }{2L}} 
\end{align*}
\end{lemma}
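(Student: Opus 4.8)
The plan is to recognize this as a matrix Chernoff bound for the $r$-th largest eigenvalue of a sum of independent, bounded, positive-semidefinite Hermitian matrices, and to reprove it by reducing to the classical minimum-eigenvalue matrix Chernoff bound of Ahlswede--Winter and Tropp. Since the claim is a \emph{lower} tail for $\lambda_r(\mathbf{Y})$, every exponential step will use a parameter $\theta<0$.

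The key reduction is to pass from the $r$-th eigenvalue to a minimum eigenvalue on an $r$-dimensional compression. Let $S$ be the span of the top $r$ eigenvectors of $\E[\mathbf{Y}]$ and let $\mathbf{P}_S$ be the orthogonal projector onto $S$. By the Courant--Fischer characterization, choosing the competing subspace to be $S$ gives
\begin{align*}
\lambda_r(\mathbf{Y}) \;\geq\; \lambda_{\min}\big(\mathbf{P}_S \mathbf{Y}\,\mathbf{P}_S\big|_S\big),
\end{align*}
so it suffices to lower-tail-bound the smallest eigenvalue of the compressed matrix $\mathbf{P}_S\mathbf{Y}\mathbf{P}_S$ acting on the $r$-dimensional space $S$. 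The compressed summands $\mathbf{P}_S\mathbf{X}_k\mathbf{P}_S$ are again independent, Hermitian, positive-semidefinite, and bounded above by $L$, and because $S$ is an invariant subspace of $\E[\mathbf{Y}]$ the compression of the mean has eigenvalues equal to the top $r$ eigenvalues of $\E[\mathbf{Y}]$, so its smallest eigenvalue is exactly $\mu_r$. This is precisely why the dimensional prefactor will be $r$ and not $n$.

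On the $r$-dimensional problem I would then run the standard Laplace-transform argument. For $\theta<0$, Markov's inequality applied to $\Tr e^{\theta(\,\cdot\,)}$ gives $\mathbf{Pr}(\lambda_{\min}\leq (1-\epsilon)\mu_r)\leq e^{-\theta(1-\epsilon)\mu_r}\,\E\,\Tr e^{\theta\,\mathbf{P}_S\mathbf{Y}\mathbf{P}_S}$. Lieb's concavity theorem with Jensen gives the cumulant subadditivity $\E\,\Tr e^{\theta\,\mathbf{P}_S\mathbf{Y}\mathbf{P}_S}\leq \Tr\exp\big(\sum_k\log\E\,e^{\theta\,\mathbf{P}_S\mathbf{X}_k\mathbf{P}_S}\big)$. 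For each summand, $0\preceq \mathbf{P}_S\mathbf{X}_k\mathbf{P}_S\preceq L\mathbf{I}$ together with convexity of $x\mapsto e^{\theta x}$ on $[0,L]$ yields $\E\,e^{\theta\,\mathbf{P}_S\mathbf{X}_k\mathbf{P}_S}\preceq \mathbf{I}+\tfrac{e^{\theta L}-1}{L}\E[\mathbf{P}_S\mathbf{X}_k\mathbf{P}_S]$, and then operator monotonicity of $\log$ with $\log(\mathbf{I}+\mathbf{A})\preceq\mathbf{A}$ collapses the bound to the scalar quantity controlled by $\tfrac{e^{\theta L}-1}{L}\mu_r$. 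Optimizing with $\theta=\tfrac1L\log(1-\epsilon)$ produces the factor $\big(e^{-\epsilon}/(1-\epsilon)^{1-\epsilon}\big)^{\mu_r/L}$, and the elementary estimate $-\epsilon-(1-\epsilon)\log(1-\epsilon)\leq-\epsilon^2/2$ on $[0,1)$ gives the clean form $r\,e^{-\mu_r\epsilon^2/(2L)}$; I note that the exponent in the stated final inequality should carry a minus sign.

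The step I expect to be the main obstacle is the compression reduction rather than the scalarization: one must verify that restricting to the top-$r$ eigenspace of $\E[\mathbf{Y}]$ simultaneously (i) only decreases $\lambda_r$ to a minimum eigenvalue, (ii) preserves independence, positivity, and the uniform bound $L$ of the summands, and (iii) pins the mean's smallest eigenvalue to $\mu_r$ in dimension $r$. Once this reduction is in hand, the remaining Laplace-transform/Lieb machinery is the classical argument and goes through verbatim.
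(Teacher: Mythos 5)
Your proof is correct, and there is nothing in the paper to compare it against: Lemma~\ref{matcher} is stated as an imported result from \cite{eigen} with no proof given, so your reconstruction is judged against the standard argument in that literature, which it matches exactly. The Courant--Fischer compression onto the top-$r$ invariant subspace of $\E[\mathbf{Y}]$ (reducing $\lambda_r$ to a minimum eigenvalue in dimension $r$, which is precisely where the prefactor $r$ comes from), followed by the Laplace-transform/Lieb machinery with $\theta = \tfrac{1}{L}\log(1-\epsilon) < 0$, is the canonical proof of this eigenvalue Chernoff bound, and each of the three verification points you isolate (monotonicity of the compression, preservation of independence/positivity/the bound $L$, and the compressed mean having smallest eigenvalue $\mu_r$) checks out. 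You are also right that the final expression in the paper's statement is missing a minus sign: it should read $1 - r\, e^{-\mu_r \epsilon^2 / (2L)}$, as otherwise the last inequality in the chain is vacuous.
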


\bibliography{collas2022_conference}
\bibliographystyle{collas2022_conference}

\appendix

\end{document}